\newcommand\nin{\noindent}
\newcommand\emm[1]{{\it #1\/}}
\newcommand{\ignore}[1]{}
\newcommand\lan{\langle}
\newcommand\real{\mathbb{R}}
\newcommand\nat{\mathbb{N}}
\newcommand\ran{\rangle}
\newcommand*{\False}{\mathsf{False}}
\newcommand*{\True}{\mathsf{True}}
\newcommand{\ncond}[1]{\mathsf{S}_{#1}}
\newcommand{\npos}[1]{\mathsf{T}_{#1}}
\newcommand{\nneg}[1]{\mathsf{F}_{#1}}
\newcommand{\nwgt}[1]{\mathsf{W}_{#1}}
\newcommand{\nprd}[1]{\mathsf{P}_{#1}}
\newcommand{\dval}{wl}
\newcommand\gbc{{\sc GBC}\xspace}
\newcommand\gbr{{\sc GBR}\xspace}
\newcommand\dts{{\sc DTS}\xspace}
\newcommand\sat{{\sc SAT}\xspace}
\newcommand\smt{{\sc SMT}\xspace}
\newcommand\vgb{{\sc VeriGB}\xspace}
\newtheorem{theorem}{Theorem}[section]
\newtheorem{lemma}[theorem]{Lemma}
\newtheorem{definition}[theorem]{Definition}
\begin{document}
\sloppy

\title{Verifying Robustness of Gradient Boosted Models}
\author{Gil Einziger,
\ Maayan Goldstein,
\ Yaniv Sa'ar,
Itai Segall\\
{Nokia, Bell Labs}\\
gilein@bgu.ac.il,
\{maayan.goldstein, yaniv.saar, itai.segall\}@nokia-bell-labs.com
}
\date{}
\maketitle

\begin{abstract}
Gradient boosted models are a fundamental machine learning technique.
Robustness to small perturbations of the input is an important quality measure
for machine learning models, but the literature lacks a method to prove the
robustness of gradient boosted models.

This work introduces \vgb, a tool for quantifying the robustness of gradient
boosted models. \vgb encodes the model and the robustness property as an SMT
formula, which enables state of the art verification tools to prove the model's
robustness.
We extensively evaluate \vgb on publicly available datasets and demonstrate a
capability for verifying large models. Finally, we show that some model
configurations tend to be inherently more robust than others.
\end{abstract}

\section{Introduction}
Gradient boosted models are fundamental in machine learning and are among the
most popular techniques in practice. They are known to achieve good accuracy
with relatively small models, and are attractive in numerous domains ranging
from computer vision to
transportation~\cite{990517,DBLP:journals/corr/YangYLL15,Survey1,FREUND1997119,Chapelle:2010:YLR:3045754.3045756,transport}.
They are easy to use as they do not require normalization of input features, and
they support custom loss functions as well as classification and regression.
Finally, the method has a solid theoretical grounding~\cite{GBTheory}.

Machine learning models are often vulnerable to adversarial perturbations, which
may cause catastrophic failures (e.g., by misclassification of a traffic
sign). Specifically, Figure~\ref{fig:gtsrb_misses} exemplifies that gradient
boosted models are indeed vulnerable to such perturbations. Thus, identifying
which models are robust to such manipulations and which are not is critical.
Indeed, numerous works suggested training techniques that increase the
robustness~\cite{Survey2,Sun2007}.
However, there is currently no method to formally verify gradient
boosted models.
%
Furthermore, it is not clear how
the configuration parameters of such models affect their robustness.
These knowledge gaps make it challenging to guarantee the reliability of
gradient boosted solutions.

\begin{figure}[tb]
	\begin{center}
		\includegraphics[width = 0.93\columnwidth]{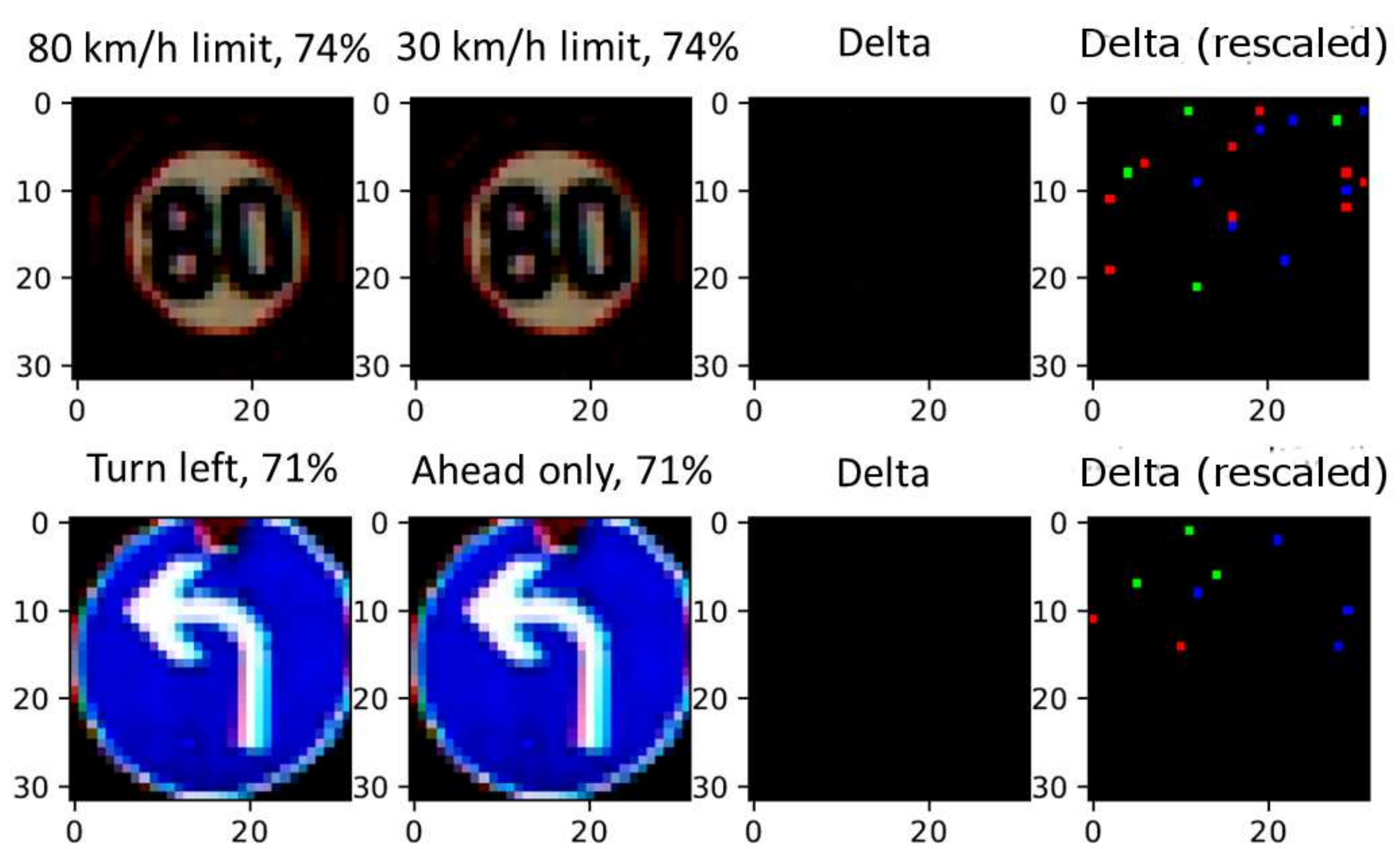}
	\end{center}
	\caption{Example of the lack of robustness in a gradient boosted model trained
	over a traffic signs dataset. In the first row, an ``80 km/h speed limit'' sign
	is misclassified as a ``30 km/h speed limit''. In the second row, a ``turn
	left'' sign is misclassified as ``ahead only''.
	Observe in the third column (delta, computed as the difference in pixel values
	of the two images) that the applied changes are barely visible to the naked eye
	(delta of +/-3 in the range of 256 values per pixel per color). The fourth column highlights the modified pixels.
	}
	\label{fig:gtsrb_misses}
\end{figure}

In the last couple of decades, formal methods successfully increased the
reliability of numerous software and hardware systems. Such success gave rise to
diverse verification methods such as model checking, termination analysis, and
abstract interpretation. Formal methods are especially appealing in situations
where the cost of mistakes is exceptionally high. Examples include mission-critical
solutions as well as mass-produced hardware.
Unfortunately, machine learning models are fundamentally different from
traditional software artifacts, and we cannot directly use existing
verification techniques for machine learning models.  
The research community already started addressing the problem for neural
network models~\cite{Pulina2010,KatzBDJK17,marta,Gehr2018AISA,mooly}. Here we
focus on an area that has not been covered so far -- verification of robustness of
gradient boosted models.

The main contribution of this work is the \vgb tool for verifying the robustness
of gradient boosted models.
\vgb encapsulates novel and formally proven methods that translate such models,
and robustness properties into SMT formulas. Then, we feed these formulas to
a standard SMT solver, which proves the robustness or provides
a counter-example.
\vgb includes runtime optimizations that make the verification process
practical. We extensively evaluate it with public datasets and demonstrate
scalability for large and accurate models. Finally, we highlight that some model
configurations are fundamentally more robust than others.

The rest of this paper is organized as follows: In
Section~\ref{sec:preliminaries} we provide background on logic, decision trees,
and gradient boosted models. Next, in Section~\ref{sec:robustness}, we formally
define the robustness properties. The SMT formula representation of gradient
boosted models is given in Section~\ref{sec:encoding_gbt}, and that of the
robustness property in Section~\ref{sec:encoding_robustness}. Next,
Section~\ref{sec:optimization} suggests optimizations of these encodings
improving their runtime. Section~\ref{sec:eval} evaluates \vgb on
several publicly-available datasets, while Section~\ref{sec:related} surveys
related work.
We conclude in Section~\ref{sec:conclusion}, which discusses the implications of
our work and suggests directions for future research.

\section{Preliminaries}
\label{sec:preliminaries}

\subsection{Logic and Linear Arithmetic}
A \emm{propositional formula} is defined inductively as one of the following:
\emm{(i)} `True' and `False' constants (T and F).
\emm{(ii)} a variable $x_i \in \{x_1, \ldots, x_m\}$;
\emm{(iii)} if $\varphi$ and $\psi$ are propositional formulas then so are $\neg
\varphi$, $\varphi \vee \psi$, $\varphi \wedge \psi$, $\varphi \rightarrow
\psi$, $\varphi \leftrightarrow \psi$ (with their usual interpretation).
Given a propositional formula $\varphi$, the Boolean satisfiability problem
(\sat) determines whether there exists an assignment under which $\varphi$
evaluates to $\True$. 

\emph{Satisfiability Modulo Theories (\smt)} extends the Boolean \sat problem by
combining a variety of underlying theories \cite{BSST09}. We use the linear real
arithmetic theory, which extends the propositional fragment with all rational
number constants, and with the symbols: $\{+, -, \cdot, \leq, \geq\}$.
%
%
A formula $\varphi$ (be that an \smt or \sat instance) is said to be
\emm{satisfiable}, if $\varphi$ evaluates to $\True$ for some assignment
$\vec{x} \in \real^{m}$. If there is no such assignment, we say that $\varphi$
is \emm{unsatisfiable}.

\subsection{Decision Trees}
Decision trees are functions that receive an assignment $\vec{x} \in \real^{m}$
and return a value. Formally, a \emm{decision tree structure} (\dts) $D = \lan
N, I, L\ran$ is defined as follows:
\begin{itemize}
	\item $N = \{n_1, \ldots, n_k\}$: is the set of nodes in the tree, and $n_1$
	is defined to be the \emm{root node} of the tree.
	\item $I \subseteq N$: is the subset of internal nodes in the tree. An
	\emm{internal node} is a triplet $n = \lan \ncond{n}, \npos{n}, \nneg{n}\ran$,
	where $\ncond{n}$ is a condition expressing the decision of node $n$ (an \smt
	formula), and $\npos{n}\in N$ (resp., $\nneg{n} \in N$) is the target successor
	node when the condition evaluates to $\True$ (resp., $\False$).
	\item $L = N \setminus I$: is the subset of leaf nodes in the tree, i.e. nodes
	for which there is no successor. A \emm{leaf node} $n = \lan \nwgt{n} \ran$
	also has a weight $\nwgt{n} \in \real$.
\end{itemize}
%
%
%
Intuitively, $\ncond{}$ (resp., $\npos{}$ and $\nneg{}$) is a dictionary
that associates to every $n\in I$ a condition $\ncond{n}$ (resp., a positive
child $\npos{n}\in N$ and a negative child $\nneg{n}\in N$).
$\nwgt{}$ is a dictionary that associates to every $n\in L$ a weight $\nwgt{n}
\in \real$.

A \dts $D$ is said to be \emm{well-formed} if, and only if, every node $n \in
N$ has exactly one predecessor node, except for the root node that has no
predecessor. In a well-formed tree, we denote by $\nprd{n}$ the predecessor of
node $n\in N$.
Given an input vector $\vec x \in \real^m$, the \emm{valuation of a \dts $D$ on
$\vec x$} is a function $\hat{D}: \real^m \rightarrow \real$. Tree $D$ is
traversed according to $\vec{x}$, ending in a leaf node $n\in L$, and function
$\hat{D}(\vec x)$ is the weight of that node, i.e. $\nwgt{n}\in \real$.

\subsection{Gradient Boosted Trees}
Gradient boosted regression is an ensemble technique that constructs a strong
learner by iteratively adding weak learners (typically decision
trees)~\cite{GBTheory}.
Formally, a \emm{Gradient Boosted Regressor (\gbr)} is a sequence of $r$
decision trees $R = \lan D_1, \ldots, D_r\ran$.
%
%
Given an input vector $\vec{x} \in \real^m$, the \emm{valuation of a \gbr $R$}
is the sum of valuations of its $r$ decision trees. That is, $\hat{R}(\vec{x}) =
\sum^r_{i=1} \hat{D}_i(\vec{x})$.

Gradient boosted classification is a tree ensemble technique that constructs a
strong learner per each class (again, by iteratively adding weak learners), to
assign a class for a given input. Let $c$ be the number of classes.
Formally, a \emm{Gradient Boosted Classifier} (\gbc) $C = \lan R_1, \ldots,
R_c\ran$ is a sequence of $c$ gradient boosted regressors, where regressor
$R_j = \lan D_1^j, \ldots, D_r^j \ran$.
%
%
%
Given an input vector $\vec{x} \in \real^m$, the \emm{valuation of $C$},
valuates all $c$ regressors over $\vec{x}$ and returns the class associated with
the maximal value, namely: $\hat{C}(\vec{x}) = \arg\max_j (\hat{R}_j(\vec{x}))$.
We assume that there is an association between each input vector and a single class\footnote{In
cases where multiple regressors return the same maximal value we can break the
symmetry using their indices.}.
%

\section{Robustness of Machine Learning Models}
\label{sec:robustness}
%
%
Robustness means that small perturbations in the input have little effect on the
outcome. That is, for classifiers the classification remains the same, and for
regressors, the change in valuation is bounded. This section formally defines
robustness properties, in a similar manner
to~\cite{Pulina2012,mooly,KatzBDJK17,Robustness-Moosavi}.

Consider a regression model $R$, and let $\hat{R}(\vec{x})$ be the valuation
function of $R$ for an input $\vec{x} \in \real^m$. We define \emm{local
adversarial $(\epsilon, \delta)$-robustness for an input $\vec{x}$}, as follows:
\begin{definition}[local adversarial robustness of regressors]
	\label{def:local_robust_regression}
	A regression model $R$ is said to be $(\epsilon, \delta)$-robust for an input
	$\vec{x}$, if for every input $\vec{x}'$ such that $||\vec{x} - \vec{x}'||_p <
	\epsilon$, the output is bound by $\delta$, i.e.,
	$|\hat{R}(\vec{x}) - \hat{R}(\vec{x}')| \leq \delta$.
\end{definition}
\nin Here, $||\vec{x}-\vec{x}'||_p$ is used to specify the distance between two
vectors $\vec{x}$ and $\vec{x}'$ according to some norm $p$. For example, one may
compute the distance between two images as the maximal difference between pairs
of corresponding pixels (i.e., $p = \infty$), or the sum of these differences 
(i.e., $p = 1$).
Throughout this paper we use norm $p = \infty$, but our techniques are
applicable to any norm that is linear to the input.   

Next, consider a classification model $C$ and let $\hat{C}(\vec{x})$ be the
valuation function of $C$ for an input $\vec{x}\in\real^m$. We define
\emm{local adversarial $\epsilon$-robustness for an input $\vec{x}$} as follows:
\begin{definition}[local adversarial robustness of classifiers]
	\label{def:local_robust_classifictation}
	A classification model $C$ is said to be $\epsilon$-robust for an input
	$\vec{x}$, if for every input $\vec{x}'$ such that $||\vec{x} - \vec{x}'||_p <
	\epsilon$, the output does not change its classification, i.e.,
	$\hat{C}(\vec{x}) = \hat{C}(\vec{x}')$.
\end{definition}

\ignore{Moved to the encoding section. 
	To avoid the complexity of checking universal quantification over $\vec{x}'$, we
	negate the definition of local robustness. Namely, given a regression model (resp.,
	classification model) we check whether the expression $||\vec{x} - \vec{x}'||_p < \epsilon\
	\wedge\ |\hat{R}(\vec{x}) - \hat{R}(\vec{x}')| \geq \delta$ (resp., $||\vec{x} -
	\vec{x}'||_p < \epsilon\ \wedge\ \hat{C}(\vec{x}) \neq \hat{C}(\vec{x}')$) is
	satisfiable. Observe, that an assignment that satisfies this expression is a
	counter-example that disproves local adversarial robustness of the given input $\vec{x}$.
	On the other hand, if there is no such assignment, we proved local adversarial robustness.
}

%
The above definitions aim to certify a given input but do not guarantee much
regarding the model itself. Therefore, we extend these definitions to capture
the behavior over a set of inputs $A$. 
%
%
We define \emm{$\rho$-universal adversarial $(\epsilon, \delta)$-robustness on
a set of inputs $A$}, as follows:
\begin{definition}[universal adversarial robustness of regressors]
	\label{lemma:universal_robust_regression}
	A regression model $R$ is said to be $\rho$-universally $(\epsilon, \delta)$-robust
	over the set of inputs $A$, if it is $(\epsilon, \delta)$-robust for at least
	$\rho \cdot |A|$ inputs in $A$.
\end{definition}

Finally, we extend the classifier definition of local $\epsilon$-robustness, and
define \emm{$\rho$-universal adversarial $\epsilon$-robustness on a set of
inputs $A$}, as follows:
\begin{definition}[universal adversarial robustness of classifiers]
	\label{def:universal_robust_classifictation}
	A classification model $C$ is said to be $\rho$-universally $\epsilon$-robust
	over the set of inputs $A$, if it is
	$\epsilon$-robust for at least $\rho \cdot |A|$ inputs in $A$.
\end{definition}

Definition~\ref{lemma:universal_robust_regression} and
Definition~\ref{def:universal_robust_classifictation} capture the universal
adversarial robustness properties for regressors and classifiers.
%
%
The parameter $\epsilon$ determines the allowed perturbation change, that is,
how much an attacker can change the input. For regressors, we also require the
parameter $\delta$ that defines the acceptable change in the output, while for
classifiers we require that the classification stays the same.
Finally, the parameter $\rho$ measures the portion of robust inputs. In
Section~\ref{sec:eval}, we evaluate the $\rho$ values of varying models instead
of selecting a $\rho$ value in advance.


\section{Encodings of Gradient Boosted Models}
\label{sec:encoding_gbt}
This section explains the encoding of gradient boosted models into SMT formulas.
We start by translating a single path in a decision tree and then work our way
up until we end up with a formula for the entire model. 

\subsection{Encoding of Decision Trees}
Given a well-formed \dts $D = \lan N, I, L \ran$ and a leaf $l \in L$, we define
\emm{$path(l)$} to be the set of nodes on the path in the tree between the leaf
node $l$ and the root node $n_1$ (including both nodes). We define the
\emm{encoding of leaf $l$ in tree $D$} to be the formula $\pi(l)$ as follows:
\begin{equation*}
\pi(l):\bigwedge\limits_{n \in path(l)\setminus\{n_1\}}  
\left( 
\begin{aligned}
&\npos{\nprd{n}}=n \rightarrow \hspace{0.5em} \ncond{\nprd{n}} \\[0.25em]  
&\nneg{\nprd{n}}=n \rightarrow \neg \ncond{\nprd{n}}
\end{aligned}
\wedge \right)
\wedge \left(\dval = \nwgt{l}\right) 
\end{equation*}
The encoding $\pi(l)$ restricts the \emm{decision tree valuation variable}
$\dval$ to be the weight of the leaf ($\dval = \nwgt{l}$), and for each node $n$
in the path except for the root, if node $n$ is the positive child of its parent
($\npos{\nprd{n}} = n$) then the parent condition should hold
($\ncond{\nprd{n}}$), and if node $n$ is the negative child of its parent
($\nneg{\nprd{n}} = n$) then the negation of the parent condition should hold
($\neg \ncond{\nprd{n}}$).

\begin{lemma}[leaf encoding]
\label{lemma:correctness_leaf}
Let $\hat D$ be the valuation function of the well-formed tree $D$.
If $\pi(l)$ evaluates to $\True$, then there exists a truth assignment $\vec x
\in \real^m$, $\dval \in \real$ such that $\hat{D}(\vec x)$ reaches leaf $l$ ,
and $\hat{D}(\vec x) = \nwgt{l} = \dval$.
\end{lemma}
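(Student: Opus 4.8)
The plan is to convert any satisfying assignment of $\pi(l)$ into an explicit input that drives the deterministic traversal of $D$ down to the leaf $l$. Since $\pi(l)$ evaluates to $\True$, there is an assignment to the feature variables $x_1,\ldots,x_m$ and to $\dval$ under which every conjunct of $\pi(l)$ holds; let $\vec{x}\in\real^m$ be the corresponding vector and note that this same assignment fixes $\dval\in\real$. I would then show that running the traversal of $\hat{D}$ on $\vec{x}$ visits exactly the nodes of $path(l)$ and therefore terminates at $l$, whence $\hat{D}(\vec{x})=\nwgt{l}$.

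The key observation is that the antecedents $\npos{\nprd{n}}=n$ and $\nneg{\nprd{n}}=n$ occurring in $\pi(l)$ are purely \emph{structural} facts about the tree, not constraints on $\vec{x}$. By well-formedness, each $n\in path(l)\setminus\{n_1\}$ has a unique predecessor $\nprd{n}$, and $n$ is either the positive or the negative child of $\nprd{n}$, so exactly one of the two antecedents is true. Consequently each conjunct collapses to a concrete constraint on the parent's condition: if $n=\npos{\nprd{n}}$ then $\ncond{\nprd{n}}$ must evaluate to $\True$ under $\vec{x}$, and if $n=\nneg{\nprd{n}}$ then $\ncond{\nprd{n}}$ must evaluate to $\False$ under $\vec{x}$.

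I would then argue by induction along $path(l)$, ordered from the root $n_1$ downwards. The base case is the root, which the traversal visits by definition of $\hat{D}$. For the inductive step, suppose the traversal has reached an internal node $m=\nprd{n}$ lying on the path; the traversal evaluates $\ncond{m}$ on $\vec{x}$ and moves to $\npos{m}$ if it is $\True$ and to $\nneg{m}$ otherwise. By the constraint extracted above, the value of $\ncond{m}$ under $\vec{x}$ is precisely the one that routes the traversal to the unique child of $m$ on $path(l)$, namely $n$. Hence the traversal follows $path(l)$ node by node and stops at $l$, giving $\hat{D}(\vec{x})=\nwgt{l}$; since $\pi(l)$ also forces the conjunct $\dval=\nwgt{l}$, we conclude $\hat{D}(\vec{x})=\nwgt{l}=\dval$, as required.

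The main obstacle is making the reduction from the implicational form of $\pi(l)$ to the traversal decisions fully rigorous: one must be careful that the antecedents are read as tree-level (structural) equalities, so that exactly one implication per node is active, and that the uniqueness of the predecessor supplied by well-formedness is what guarantees both the consistency of the extracted constraints and the well-definedness of the downward path used in the induction. Once this bookkeeping is settled, the induction itself is entirely routine.
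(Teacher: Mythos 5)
Your proof is correct and follows essentially the same route as the paper's: both take a satisfying assignment, observe that the structural antecedents $\npos{\nprd{n}}=n$ and $\nneg{\nprd{n}}=n$ force the parent's condition to hold (or fail) under $\vec{x}$ exactly as needed to route the traversal along $path(l)$, and then argue recursively from the root that the traversal terminates at $l$ with $\hat{D}(\vec x)=\nwgt{l}=\dval$. Your version is somewhat more explicit about the induction and about why exactly one implication per node is active, but the underlying argument is identical.
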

\begin{proof}
Assume that the leaf encoding $\pi(l)$ evaluates to $\True$, then there exists a
truth assignment $\vec x \in \real^m$, $\dval \in \real$.
%
%
Since the tree is well-formed and following the definition of $path(l)$, we know
that every internal node $n'\in path(l) \cap I$ is a predecessor of some node
$n\in path(l)$, i.e., $n' = \nprd{n}$.
If $n$ is the positive successor of $n'$, then $(\npos{\nprd{n}} = n)$ holds,
implying that $\ncond{n'}$ holds for $\vec{x}$ as well. Thus, when the valuation
of $\hat{D}(\vec x)$ traverses tree $D$ and reaches node $n'$, we know that it
indeed turns to the positive child.
%
The same reasoning applies to the negative successor of $n'$.
By applying this reasoning recursively from the root node, 
we show that the traversal of the valuation reaches leaf $l$, and outputs
$\hat{D}(\vec x) = \nwgt{l} = \dval$.
\end{proof}

Given \dts $D = \lan N, I, L \ran$, we now define the \emm{encoding of tree
$D$} to be the formula $\Pi(D)$ as follows:
\begin{equation*}
\Pi(D):\quad \bigvee_{l \in L} \pi(l)
\end{equation*}
Namely, $\Pi(D)$ is a disjunction of formulas, where each disjunct represents a
concrete path to one of the leaves in $D$ and its respective valuation.
\begin{lemma}[tree encoding]
\label{lemma:correctness_tree}
Let $\hat D$ be the valuation function of the well-formed tree $D$.
If $\Pi(D)$ evaluates to $\True$, then there exists a truth assignment $\vec x
\in \real^m$, $\dval \in \real$, and a single leaf $l\in L$ for which
$\hat{D}(\vec x)$ reaches $l$ and outputs $\hat{D}(\vec x) = \nwgt{l} = \dval$.
\end{lemma}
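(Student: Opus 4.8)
The plan is to exploit the fact that $\Pi(D)$ is a disjunction of the per-leaf encodings, combined with the determinism of traversal in a well-formed tree. The existence part reduces almost immediately to the leaf-encoding lemma; the genuinely new content is establishing that the leaf is \emph{unique}.

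First I would dispatch existence. Since $\Pi(D):\bigvee_{l\in L}\pi(l)$ evaluates to $\True$ under some assignment, the semantics of disjunction guarantee that at least one disjunct $\pi(l)$ evaluates to $\True$ under that same assignment. Fixing such a leaf $l$, I invoke Lemma~\ref{lemma:correctness_leaf} directly: it produces a truth assignment $\vec x \in \real^m$, $\dval \in \real$ for which the valuation $\hat{D}(\vec x)$ reaches $l$ and $\hat{D}(\vec x) = \nwgt{l} = \dval$. This already supplies everything in the statement except the uniqueness of $l$.

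The substance of the argument is the uniqueness claim, which I would establish in either of two equivalent ways. The cleaner route appeals to the determinism of $\hat{D}$: because $D$ is well-formed, every node other than the root has exactly one predecessor, so the traversal induced by $\vec x$ follows a single path and terminates in exactly one leaf. Hence $\hat{D}(\vec x)$ cannot reach two distinct leaves, and $l$ is uniquely determined by $\vec x$. The alternative route argues directly that the encodings of distinct leaves are mutually exclusive: if $l_1 \neq l_2$, their paths must diverge at some internal node $n'$, where one path takes the positive branch (so $\pi(l_1)$ forces $\ncond{n'}$) and the other takes the negative branch (so $\pi(l_2)$ forces $\neg\ncond{n'}$); these are contradictory, so no single assignment can satisfy both disjuncts, and at most one $\pi(l)$ holds.

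The step I expect to be the main obstacle is making the divergence argument of the second route precise, namely identifying the unique node $n'$ at which the two paths split and verifying that the positive and negative branch indicators there force opposite literals of $\ncond{n'}$. This is a routine but slightly fiddly induction on the tree structure, which is exactly why I would prefer the first route: it simply cites well-formedness and the functional character of $\hat{D}$ fixed in the preliminaries, and thereby obtains uniqueness with no case analysis at all.
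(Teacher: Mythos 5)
Your proposal is correct, and it actually contains two arguments: your ``second route'' (mutual exclusivity of the disjuncts via a divergence node) is precisely the paper's own proof, while your preferred ``first route'' (determinism of the traversal) is a genuinely different organization of the uniqueness step. The paper argues: at least one clause of $\Pi(D)$ is true; at most one clause is true, since otherwise some internal node $n$ on a path would have $\ncond{n}$ inconsistent over $\vec x$ (both $\ncond{n}$ and $\neg\ncond{n}$ forced, exactly your divergence argument, stated just as tersely as you feared it would need to be made precise --- the paper does not identify the splitting node explicitly either); hence exactly one $\pi(l)$ holds, and Lemma~\ref{lemma:correctness_leaf} finishes. The trade-off between the two routes is worth noting. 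The paper's route proves directly that \emph{exactly one disjunct of the formula} $\Pi(D)$ is satisfied, which is the property that matters downstream: when $\Pi(D_i)$ is conjoined into $\Upsilon(R)$, it guarantees each $\dval_i$ is bound to the weight of the leaf actually reached, with no spurious bindings. Your preferred route proves the literal statement (a single leaf is \emph{reached}) essentially for free from the functional character of $\hat D$, and mutual exclusivity of the disjuncts then follows only indirectly --- by applying Lemma~\ref{lemma:correctness_leaf} to two simultaneously satisfied disjuncts $\pi(l_1)$, $\pi(l_2)$ and concluding $l_1 = l_2$ from determinism. Both are sound; yours pushes the case analysis into the semantics of $\hat D$, the paper's keeps it inside the logic of the encoding.
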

\begin{proof}
Assume that the tree encoding $\Pi(D)$ evaluates to $\True$, then there exists
a truth assignment $\vec x \in \real^m$, $\dval \in \real$.
Clearly, at least one clause in $\Pi(D)$ evaluates to $\True$.
Since tree $D$ is well formed, at most one clause in $\Pi(D)$ evaluates to
$\True$, otherwise there exists an internal node in the path $n\in path(l) \cap
I$ for which $\ncond{n}$ is inconsistent over $\vec x$.
Therefore, there exists exactly one clause in $\Pi(D)$ that evaluates to
$\True$, and exactly one leaf $l\in L$ for which $\pi(l)$ evaluates to $\True$.
%
%
If $\pi(l)$ evaluates to $\True$, then following the same reasoning of
Lemma~\ref{lemma:correctness_leaf}, the truth assignment $\vec x \in \real^m$,
$\dval \in \real$ reaches leaf $l$ and outputs $\hat{D}(\vec x) = \nwgt{l} =
\dval$.
\end{proof}

\subsection{Encoding of Gradient Boosted Trees}
Given \gbr $R = \lan D_1, \ldots, D_r\ran$ and following
Lemma~\ref{lemma:correctness_leaf}, and Lemma~\ref{lemma:correctness_tree}, we
define the \emm{encoding of regressor $R$} to be the formula $\Upsilon(R)$ as
follows:
\begin{equation*}
\Upsilon(R):\quad  \Big(\bigwedge\limits_{i=1}^{r}
\Pi(D_i)\Big) \wedge out=\sum\limits_{i=1}^{r} \dval_i
\end{equation*}
Intuitively, $\Upsilon(R)$ consists of two parts: \emm{(i)} the conjunction of
all tree encodings, ensuring that the decision tree valuation variables of each
tree $\dval_1, \ldots, \dval_r$ are restricted to their respective tree
valuations; and \emm{(ii)} a restriction of the \emm{regressor valuation
variable} $out$ to be the sum of all decision tree valuation variables $\dval_1,
\ldots, \dval_r$.
Therefore, encoding $\Upsilon(R)$ characterizes regressor $R$. 
\begin{theorem}[regressor encoding]
\label{thm:regresor_correctness}
Let $\hat{R}$ be the valuation function of regressor $R$.
If $\Upsilon(R)$ evaluates to $\True$, then there exist a truth assignment
$\vec x \in \real^m$, $out \in \real$, such that $\hat{R}(\vec x) = out$.
\end{theorem}
\begin{proof}
The proof follows from the definitions and Lemma~\ref{lemma:correctness_tree}.
\end{proof}

Given \gbc $C = \lan R_1, \ldots, R_c \ran$ and following
Theorem~\ref{thm:regresor_correctness}, we define the \emm{encoding of
classifier $C$} to be the formula $\Gamma(C)$ as follows:
\begin{equation*}
\Gamma(C):\quad \bigwedge\limits_{j=1}^c \Upsilon(R_j)  
\wedge \bigvee\limits_{j=1}^c \Big( arg = j \leftrightarrow
\bigwedge\limits_{k=1}^c out_j > out_k \Big)
\end{equation*}
Intuitively, $\Gamma(C)$ consists of two parts: \emm{(i)} the conjunction of all
regressor encodings, ensuring that the regressor valuation variables $out_1,
\ldots, out_r$ are restricted to their respective regressor valuations; and
\emm{(ii)} a restriction of the \emm{classifier valuation variable} $arg$ to be
the maximal regressor valuation (i.e., operator $\arg\max$).
Therefore, $\Gamma(C)$ charactarizes classifier $C$.
\begin{theorem}[classifier encoding]
\label{thm:classifier_correctness}
Let $\hat{C}$ be the valuation function of classifier $C$.
If $\Gamma(C)$ evaluates to $\True$, then there exist a truth assignment $\vec x
\in \real^m$, $arg \in \{1,\ldots,c\}$, such that $\hat{C}(\vec x) = arg$.
\end{theorem}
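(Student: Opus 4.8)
The plan is to reduce the classifier claim to the already-established regressor encoding theorem together with the semantics of the $\arg\max$ clause in the second conjunct of $\Gamma(C)$. I would begin by assuming that $\Gamma(C)$ evaluates to $\True$, which yields a satisfying assignment fixing the shared input variables $\vec x \in \real^m$, the per-regressor valuation variables $out_1,\ldots,out_c$, and the classifier variable $arg$. The goal is then to exhibit this very $\vec x$ and this very $arg$ and verify that $\hat{C}(\vec x) = arg$.

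Since $\Gamma(C)$ is a conjunction, every conjunct holds under the assignment. First, each $\Upsilon(R_j)$ evaluates to $\True$, so by Theorem~\ref{thm:regresor_correctness} I obtain $\hat{R}_j(\vec x) = out_j$ for every $j \in \{1,\ldots,c\}$. The structural point that makes this work is that all $c$ regressor encodings are conjoined over the \emph{same} input variables $x_1,\ldots,x_m$, differing only in their suitably renamed, disjoint auxiliary variables (the $\dval$ variables and the $out_j$); hence one vector $\vec x$ simultaneously realizes all $c$ regressor valuations without interference. Second, the disjunction $\bigvee_{j=1}^c \big( arg = j \leftrightarrow \bigwedge_{k} out_j > out_k \big)$ forces $arg$ to name the index whose value is maximal among $out_1,\ldots,out_c$.

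Combining these, I would substitute $out_j = \hat{R}_j(\vec x)$ into the second conjunct, which shows $arg = \arg\max_j \hat{R}_j(\vec x)$. By the definition of the classifier valuation, $\hat{C}(\vec x) = \arg\max_j \hat{R}_j(\vec x)$, so $\hat{C}(\vec x) = arg$, establishing the existence of $\vec x \in \real^m$ and $arg \in \{1,\ldots,c\}$ as required.

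The main obstacle I anticipate is arguing rigorously that the $\arg\max$ clause pins down a unique, well-defined value of $arg$. One must verify that, under any satisfying assignment, exactly one index $j$ satisfies the inner maximality condition, so that the biconditional leaves $arg$ no freedom; this relies on the tie-breaking convention noted after the classifier definition (distinct maximal values, or symmetry resolved by index) to ensure $\arg\max_j$ is single-valued. The remaining steps — decomposing the conjunction and invoking Theorem~\ref{thm:regresor_correctness} once per regressor — are routine.
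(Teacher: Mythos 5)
Your proposal is correct and takes essentially the same route as the paper: the paper's entire proof is the one-liner ``the proof follows from the definitions, theorem, and lemmas above,'' and your argument --- decomposing the conjunction, invoking Theorem~\ref{thm:regresor_correctness} for each $\Upsilon(R_j)$ over the shared input $\vec x$, and reading $arg$ off the $\arg\max$ clause --- is precisely the expansion of that chain. Your closing concern about the maximizing index being unique is resolved by the tie-breaking convention the paper states in the footnote after the \gbc definition, exactly as you suggest.
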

\begin{proof}
The proof follows from the definitions, theorem, and lemmas above.
\end{proof}


\section{Encodings of Local Robustness Properties}
\label{sec:encoding_robustness}
In this section, we encode the local robustness properties defined in
Section~\ref{sec:robustness}.
Recall that a regression model (resp., classification model) satisifies local
adversarial robustness for an input $\vec{x}$
(Definitions~\ref{def:local_robust_regression},
and~\ref{def:local_robust_classifictation}), if for all $\vec{x}'$, if
$||\vec{x} - \vec{x}'||_p < \epsilon$, then the difference between the valuation
of $\vec{x}$, and that of $\vec{x}'$ is bound (resp., we get the same
classification for $\vec{x}$, and for $\vec{x}'$).

Our goal is to find whether there exists an assignment to $\vec{x}'$ that
satisfies both the model encoding, and the negation of the local adversarial
robustness property.
%
%
An assignment $\vec{x}'$ that satisfies both conjuncts constitutes a
\emm{counter-example} that disproves local adversarial robustness of the given
input $\vec{x}$. Alternatively, local adversarial robustness holds if there is
no such assignment.

Given an input $\vec{x}$, and $\epsilon, \delta \geq 0$, we define the
\emm{encoding of local adversarial robustness} to be a formula $\Phi$ as
follows:
\begin{align*}
\Phi: \quad \phi \ \wedge \ 
\bigwedge\limits_{i=1}^m \ 
\begin{cases}
|x_i - x'_i| \leq \epsilon, & x_i\in\real \\
x'_i \in \{v\in\nat \ : \ |x_i - v| \leq \epsilon\}, &
x_i\in\nat
\end{cases}
\end{align*}
%
%
Where $\phi$ is $|\hat{R}(\vec{x}) - \hat{R}(\vec{x}')| \geq \delta$ for
regression model, and $\phi$ is $\hat{C}(\vec{x}) \neq \hat{C}(\vec{x}')$ for
classification model.
Note that the second range of conjuncts in the expression, characterizes the
allowed pertubations ($||\vec{x} - \vec{x}'||_p < \epsilon$) for norm
$p=\infty$, which is handled differently for real, and integer features.
%

\section{Optimizations}
\label{sec:optimization}
While the construction in Sections~\ref{sec:encoding_gbt} and
\ref{sec:encoding_robustness} is sound and complete, it is not always the most
efficient one. Thus, we now provide two optimizations based on eliminating
redundant clauses that cannot be satisfied, and on parallelizing the
verification process.

\subsection{Pruning}
``Pruning'' is a somewhat overloaded term. In the context of machine learning,
pruning typically refers to the process of removing sections of decision trees
that have little impact on the outcome, thus reducing over-fitting. In
the model-checking community, pruning is the process of trimming unreachable
parts of the search space, thus helping the model checker focus its
search.

Our approach combines these two notions. Namely, we remove all unsatisfiable
leaf clauses with respect to the robustness parameter ($\epsilon$), which allows
for faster calculation.
Formally, given \dts $D=\lan N, I, L \ran$ and property $\Phi$, we define the
\emm{$\Phi$-pruned encoding of leaf $l$ in tree $D$} to be:
\begin{equation*}
	\pi^{\Phi}(l) = 
	\begin{cases}
		\pi(l), & \pi(l)\wedge\Phi \textrm{ is satisfiable} \\
		\False, & \pi(l)\wedge\Phi \textrm{ is unsatisfiable}
	\end{cases}
\end{equation*} 
Note that pruning can be applied to diverse properties, but this work is focused
on the robustness property.

Next, we define the corresponding $\Upsilon^{\Phi}$ (resp., $\Gamma^{\Phi}$) to
be the \emm{$\Phi$-pruned encoding of regressor $R$} (resp., \emm{$\Phi$-pruned
	encoding of classifier $C$}), which replaces each occurrence of leaf encoding
$\pi(l)$ with its pruned version $\pi^{\Phi}(l)$.
%
The following theorem establishes the correctness of $\Phi$-pruning:
\begin{theorem}[safe pruning]\ \\[-1em] 
\begin{enumerate}[1.]
	\item \textbf{Regressor:} the conjunction $\Upsilon(R) \wedge \Phi$
	is satisfiable, if and only if, the conjunction $\Upsilon^{\Phi}(R) \wedge \Phi$
	is satisfiable.
	\item \textbf{Classifier:} the conjunction $\Gamma(C) \wedge \Phi$ is
	satisfiable, if and only if, the conjunction $\Gamma^{\Phi}(C) \wedge \Phi$ is
	satisfiable.
	%
\end{enumerate}
\end{theorem}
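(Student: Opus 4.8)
The plan is to derive both equivalences from a single structural observation: $\Phi$-pruning replaces certain leaf disjuncts $\pi(l)$ by $\False$, and it does so \emph{only} for leaves whose disjunct is jointly inconsistent with $\Phi$. I will handle the two directions of each ``if and only if'' separately, treating the satisfiability of the pruned and unpruned conjunctions, and I will reduce the classifier claim to the regressor claim.

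First I would record the elementary monotonicity fact that $\pi^{\Phi}(l)$ logically entails $\pi(l)$ for every leaf $l$: by definition $\pi^{\Phi}(l)$ is either $\pi(l)$ itself or the constant $\False$, and $\False$ entails anything. Since entailment is preserved by taking disjunctions over the leaves and by conjoining the constraints that pruning leaves untouched (the sum constraint $out = \sum_{i} \dval_i$ in $\Upsilon$, together with the $\arg\max$ conjunct in $\Gamma$), it follows that $\Upsilon^{\Phi}(R) \rightarrow \Upsilon(R)$ and $\Gamma^{\Phi}(C) \rightarrow \Gamma(C)$. This immediately yields the ``easy'' direction of both statements: any assignment satisfying the pruned conjunction together with $\Phi$ also satisfies the original conjunction together with $\Phi$.

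The substantive direction is the converse. Suppose $\sigma$ satisfies $\Upsilon(R) \wedge \Phi$. Then $\sigma$ satisfies each tree encoding $\Pi(D_i)$, so by Lemma~\ref{lemma:correctness_tree} there is (a unique) active leaf $l_i$ with $\sigma \models \pi(l_i)$. The key point is that this active leaf cannot have been pruned: since $\sigma$ satisfies both $\pi(l_i)$ and $\Phi$, the conjunction $\pi(l_i) \wedge \Phi$ is satisfiable, and therefore by the definition of $\Phi$-pruning $\pi^{\Phi}(l_i) = \pi(l_i)$. Consequently $\sigma \models \pi^{\Phi}(l_i)$, so $\sigma$ still satisfies the pruned disjunction $\bigvee_{l} \pi^{\Phi}(l)$ for every $D_i$. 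Because pruning leaves the sum constraint unchanged, $\sigma \models \Upsilon^{\Phi}(R)$, and as $\sigma \models \Phi$ by assumption, $\Upsilon^{\Phi}(R) \wedge \Phi$ is satisfiable.

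Finally, the classifier claim follows by applying the same argument to each constituent regressor. Given $\sigma \models \Gamma(C) \wedge \Phi$, the assignment $\sigma$ satisfies every $\Upsilon(R_j)$, so the preceding paragraph shows that the active leaf in each tree of each regressor survives pruning and hence $\sigma \models \Upsilon^{\Phi}(R_j)$; since the $\arg\max$ conjunct of $\Gamma$ is syntactically unaffected by pruning, $\sigma \models \Gamma^{\Phi}(C) \wedge \Phi$. I expect the only delicate step to be the observation in the hard direction that the active leaf is unprunable: this is precisely where the existence guarantee of Lemma~\ref{lemma:correctness_tree} combines with the fact that $\sigma$ \emph{itself} witnesses the satisfiability of $\pi(l_i) \wedge \Phi$. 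Everything else is routine propagation of entailment through the fixed ``plumbing'' of the encoding.
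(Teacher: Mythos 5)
Your proof is correct, and it is considerably more explicit than the paper's own argument, which consists of the single sentence that the result ``follows immediately from the associativity property of propositional logic.'' What the paper is gesturing at is a syntactic manipulation: conjoining $\Phi$ with the disjunction over leaves and distributing (distributivity, really, rather than associativity) turns each disjunct into a formula containing $\pi(l)\wedge\Phi$, and any disjunct whose $\pi(l)\wedge\Phi$ part is unsatisfiable can be replaced by $\False$ without changing satisfiability of the whole. Your proof makes the same core insight precise, but semantically rather than syntactically: you track a concrete satisfying assignment $\sigma$ and observe that its active leaf cannot have been pruned, because $\sigma$ itself witnesses the satisfiability of $\pi(l_i)\wedge\Phi$, which by the definition of $\pi^{\Phi}$ forces $\pi^{\Phi}(l_i)=\pi(l_i)$. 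This witness-tracking argument is exactly the right way to discharge the nontrivial direction, and your easy direction (pointwise entailment $\pi^{\Phi}(l)\rightarrow\pi(l)$ propagated through the unchanged plumbing) and the reduction of the classifier case to the regressor case are both sound. One small simplification: you do not need Lemma~\ref{lemma:correctness_tree} to obtain the active leaf --- since $\sigma$ satisfies the disjunction $\bigvee_{l}\pi(l)$, ordinary disjunction semantics already gives some $l_i$ with $\sigma\models\pi(l_i)$, and uniqueness plays no role in the argument; the appeal to the lemma is harmless but superfluous.
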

\begin{proof}
	The proofs follow immediately from the associativity property of propositional
	logic.
\end{proof}

In principle, we may use an SMT solver to check the satisfiability of $\pi(l)
\wedge \Phi$  for each leaf, in each tree.  In practice, we reduce the
dependence on SMT solvers and increase scalability by evaluating the robustness
property during the encoding of the tree, where each internal node condition
constraints a single feature $x_i$.
For norm $p=\infty$, the leaf valuation $\pi(l)$ is satisfiable, if and only if
for every node $n\in path(l)$ that refers to feature $x_i$, $|x_i - x'_i| \leq
\epsilon$.
For norm $p=1$, a necessary condition for the satisfiability of $\pi(l)$, is
that all features of $\vec{x}'$, $\sum^m_{i=1} d_i \leq \epsilon$, where:
\begin{equation*}
	d_i =
	\begin{cases}
		|x_i - x'_i|, & x_i \ \textrm {appears in } path(l) \\
		0, & x_i \ \textrm{does not appear in } path(l)
	\end{cases}
\end{equation*}
The pruning process removes paths where the given vector $\vec{x}'$ is ``far''
from the required thresholds by more than $\epsilon$, where the notion of
distance is determined by the norm.
%

\subsection{Parallelization}
It is difficult to parallelize general SMT formulas efficiently. To
increase scalability, we design our encoding in a manner that allows for
parallel evaluation of gradient boosted classifiers.
We do so by checking the robustness property separately for each
class index.
If all parallel evaluations are found robust, then the robustness property
holds.
Otherwise, there exists an assignment $\vec{x}$, and an index $q$, such that
the robustness property does not hold, and $\vec{x}$ is a counter-example. The
thread of class $q$ would discover this case and abort all other threads.

Formally, we do the following:%
\begin{equation*}
	\begin{aligned}
		& \quad \forall \vec{x}' &: \ &
		||\vec{x}-\vec{x}'||_p < \epsilon \rightarrow
		\hat{C}(\vec{x}) = \hat{C}(\vec{x}')  \\
		\Leftrightarrow & \ \neg \exists \vec{x}' &: \ &
		||\vec{x}-\vec{x}'||_p < \epsilon \wedge
		\hat{C}(\vec{x}) \neq \hat{C}(\vec{x}')\\
		\Leftrightarrow & \ \neg \exists \vec{x}'&: \ &
		||\vec{x}-\vec{x}'||_p < \epsilon \wedge
		\hat{C}(\vec{x}) \neq \sideset{}{_j}{\arg\max} \left(\hat{R}_j(\vec{x}')
		\right) \\
		\Leftrightarrow & \ \neg \exists \vec{x}'&: \ &
		||\vec{x}-\vec{x}'||_p < \epsilon \wedge
		\exists q: 
		\hat{R}_q(\vec{x}') > \hat{R}_{\hat{C}(\vec{x})}(\vec{x}')\\
		\Leftrightarrow & \ \neg \exists \vec{x}',q\hspace*{-0.75em}&: \ &
		||\vec{x}-\vec{x}'||_p < \epsilon \wedge
		\hat{R}_q(\vec{x}') > \hat{R}_{\hat{C}(\vec{x})}(\vec{x}') 
	\end{aligned}  
\end{equation*} 
Where the parameter $q$ is within $[1,c]$, and each thread verifies a different
value of $q$. 
For example, if an input is classified as class $a$, we invoke $c-1$ threads for
classes $\{1, \ldots, c\} \setminus \{a\}$, where each thread tries to verify
robustness with respect to a specific class.

\section{Evaluation}
\label{sec:eval}
We now introduce \vgb (Verifier of Gradient Boosted models), which implements
our approach in Python. \vgb utilizes Z3~\cite{z3} as the underlying SMT
solver.
We used the \texttt{sklearn}~\cite{sklearn_api} and \texttt{numpy}~\cite{numpy}
packages to train models.
We conducted the experiments on a VM with $36$ cores, a CPU speed of $2.4$ GHz,
a total of $150$ GB memory, and the Ubuntu 16.04 operating system. 
The VM is hosted by a designated server with two Intel Xeon E5-2680v2 processors
(each processor is made of 28 cores at 2.4 Ghz), $260$ GB memory, and Red Hat
Enterprise Linux Server $7.3$ operating system.
For tractability, we capped the runtime of verifying the local robustness
property by $10$ minutes.
We evaluated \vgb using the following three datasets:
\begin{enumerate}[1.]
	\item The \emm{House Sales in King County (HSKC)} dataset containing
	$22K$ observations of houses sold in between May 2014 and May
	2015 in King County, USA~\cite{Housing}. Each observation has $19$ house
	features, as well as the sale price. 
	\item The \emm{Modified National Institute of Standards and Technology
		(MNIST)} dataset containing $70K$ images of handwritten digits \cite{mnist}.
	The images are of size $28$ x $28$ pixels, each with a grayscale value ranging
	from $0$ to $255$. The images are classified into $10$ classes, one for each
	digit.
	\item The \emph{German Traffic Sign Recognition Benchmark (GTSRB)} dataset
	containing $50K$ colored images of traffic signs \cite{gtsrb}.
	The images are of size $32$ x $32$ pixels, each with three values (RGB) ranging
	from $0$ to $255$. The images are classified into $43$ classes, one for each
	traffic sign.
\end{enumerate}

\subsection{Regressor Evaluation}
We start by demonstrating \vgb's scalability to large gradient boosted
regression models using the HSKC dataset. 
We trained regressors varying the learning rates in $\{0.1,0.2,0.3\}$, the
number of trees between $50$ and $500$, and the tree depth in $\{3,5,8,10\}$.
All models have a similar score\footnote{The term score refers to the
coefficient of determination $R^2$ of the prediction.} that varies between
$0.84$ and $0.88$. Then we randomly selected $200$ observations and 
evaluated the $\rho$-universal $(\epsilon,\delta)$-robustness property with
an $\epsilon$ value of $160$ sq/ft, for the $6$ numerical features that refer to
square footage, and a $\delta$ value of $100$K\$ in the price. 
Note that there were no timeouts (where it took the SMT solver more than $10$
minutes to reach a decision) for models with less than $500$ trees, and even with $500$
trees we had only $16\%$ timeouts.


%
\begin{figure}[tb]
	\begin{center}
		\includegraphics[width = \columnwidth]{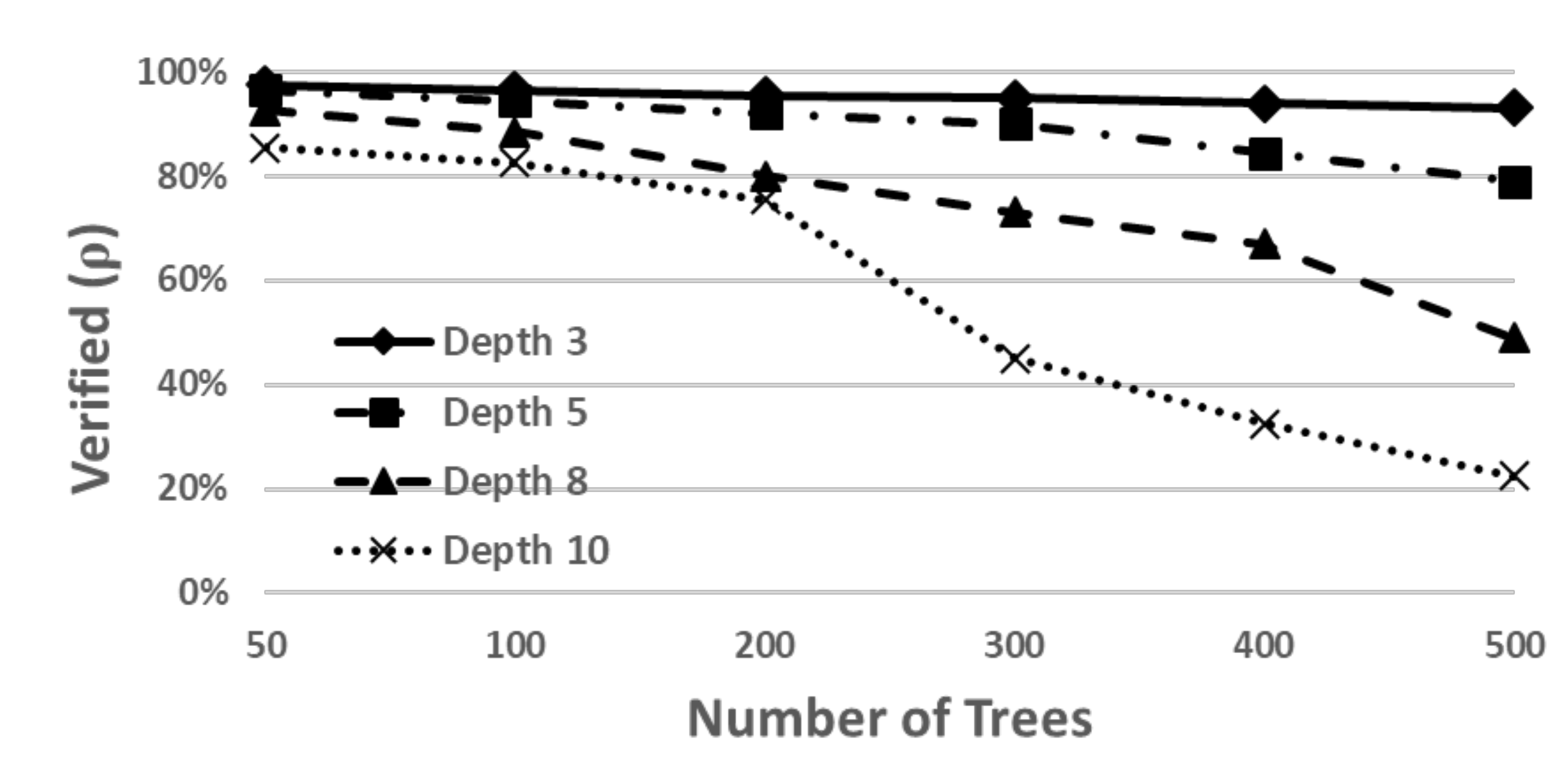}
	\end{center}
	\caption{Universal robustness eveluation for $\epsilon=160$ sq/ft, and
		$\delta=100$K\$, and regressors with a similar score.
		Illustrating the attainable portion of robust observations $\rho$, varying the number of trees and the tree depth. }
	\label{fig:housing_graph}
\end{figure}
Figure~\ref{fig:housing_graph} illustrates the results for a learning rate of
$0.1$, while the results for other learning rates are similar.
Notice that \emm{(i)} robustness degrades as the number of trees 
increases. \emm{(ii)} robustness seems to be negatively correlated with the tree
depth. That is, a model trained with a tree depth of $3$ is more robust than a
depth of $5$, which is more robust than $8$ and $10$.
%


\subsection{Classifier Evaluation}
Next, we demonstrate \vgb's capability to verify the robustness of accurate
classification models.
We trained gradient boosted models for the MNIST and GTSRB datasets with a
learning rate of $0.1$. We varied the number of trees between $20$ and
$100$, and the maximal tree depth between $3$ and $20$. The accuracy of said models
varied between $87.9\%$ and $97.3\%$ for MNIST, and between $90\%$ and $96.86\%$
for GTSRB.
We evaluated the $\rho$-universal $\epsilon$-robustness property with
$\epsilon$ values of $1$, $3$, and $5$. We randomly selected $20$ images from each
class in the training set ($200$ images for MNIST, and $860$ images for GTSRB).

The illustration in Figure~\ref{fig:gtsrb_misses} is an artifact of this
evaluation. Recall, that it shows two examples where the local adversarial
robustness property does not hold for $\epsilon=3$ for a model trained for the
GTSRB dataset. In the first example, an ``80'' km/h speed limit sign is
misclassified as a ``30'' km/h limit. In the second example, a ``turn left'' sign is misclassified as an ``ahead only'' sign. 
\begin{figure}[tb]
	\begin{center}
		\includegraphics[width = 0.93\columnwidth]{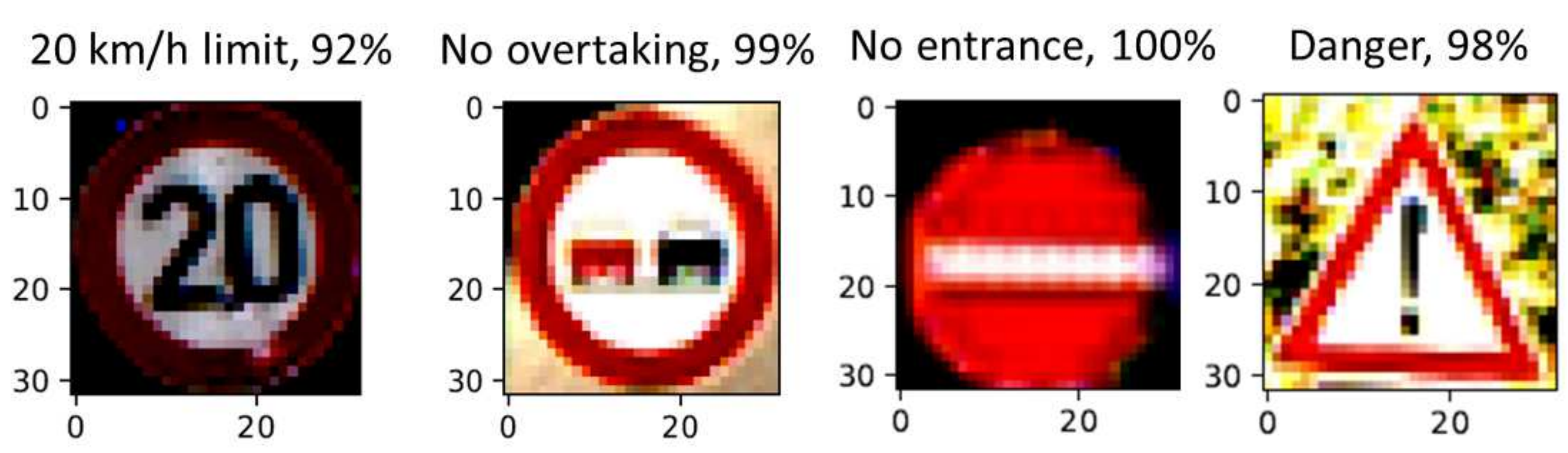}
	\end{center}
	\caption{Examples of GTSRB images that satisfy the local adversarial robustness
		property for $\epsilon =3$.}
	\label{fig:gtsrb_verified}
\end{figure}
Alternatively, Figure~\ref{fig:gtsrb_verified} shows examples of signs that
do satisfy the local adversarial robustness property for $\epsilon =3$. That is,
their classification would not change under any adversarial perturbation
that changes each pixel's RGB values by at most $3$.

Figure~\ref{fig:mnist_verified} shows examples of handwritten digits
that satisfy the local adversarial robustness property for $\epsilon =3$,
for models trained for the MNIST dataset.
\begin{figure}[tb]
	\begin{center}
		\includegraphics[width = 0.93\columnwidth]{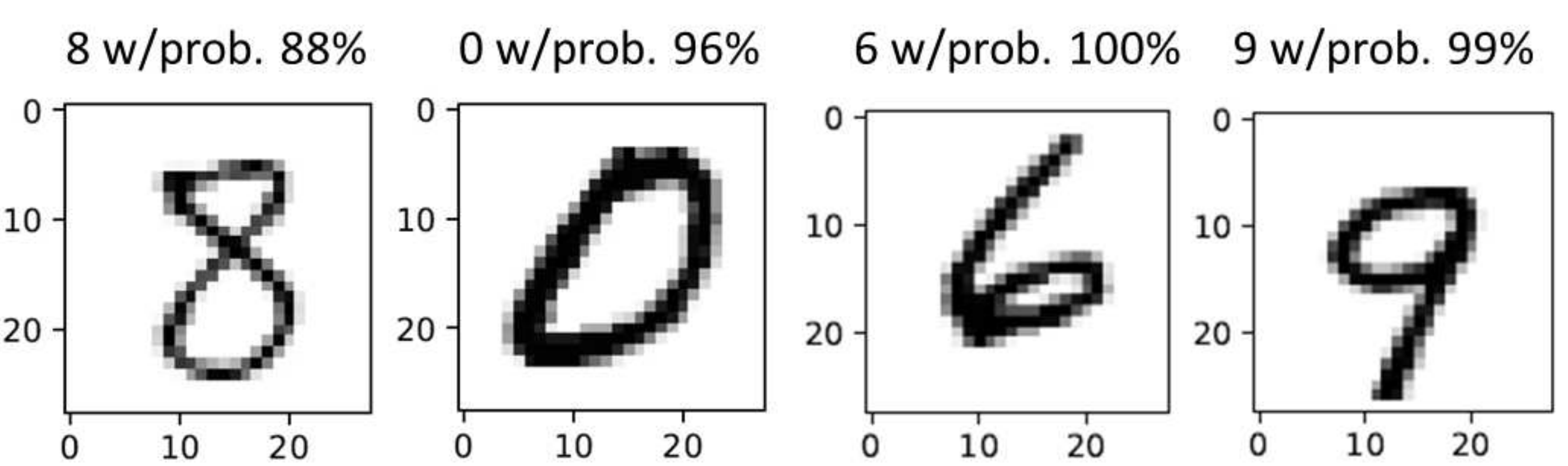}
	\end{center}
	\caption{Examples of MNIST images that satisfy the local adversarial robustness
		property for $\epsilon =3$. }
	\label{fig:mnist_verified}
\end{figure}
Alternatively, Figure~\ref{fig:mnist_misses} shows two examples where the local
adversarial robustness property does not hold. In the first example,
an image of ``1'' is misclassified as ``7''. The second image is 
misclassified as ``0'' instead of ``5'' under very slight perturbation.
These modifications are almost invisible to a human eye.
Note that the model's confidence does not indicate robustness. E.g., in the
first example the image has $95\%$ confidence to be classified as $1$, while
after applying the perturbation, it has $90\%$ confidence while being
misclassified as $7$.
\begin{figure}[tb]
	\begin{center}
		\includegraphics[width = 0.93\columnwidth]{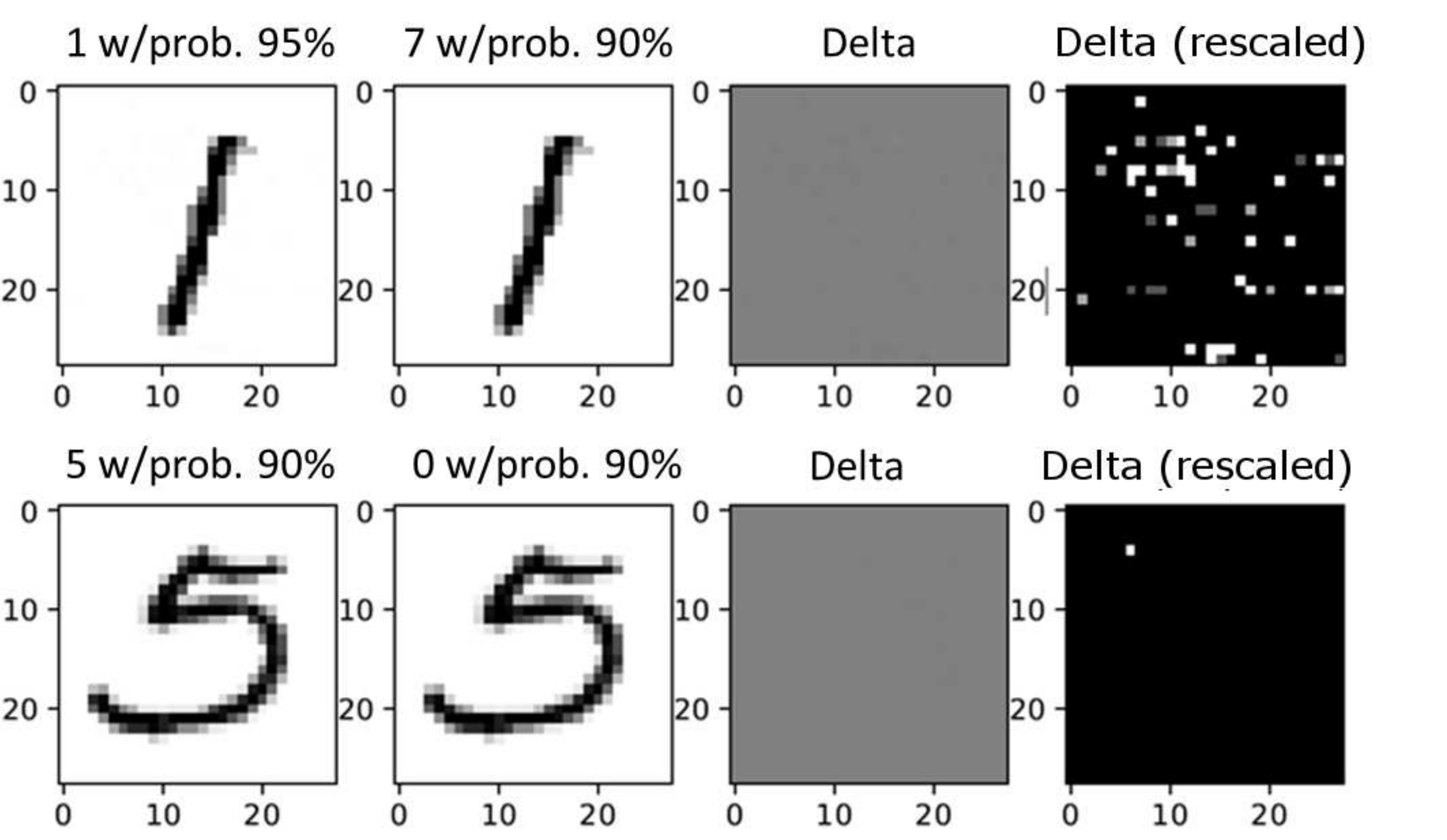}
	\end{center}
	\caption{Examples of MNIST images that do not satisfy local adversarial
	robustness for $\epsilon=3$. In the first row, an image of ``1'' is
	misclassified as ``7''. In the second row, an image of ``5'' is misclassified
	as ``0''.
	Observe in the third column (delta) that the applied changes are barely
	visible to the naked eye (delta of +/-3 in the range of 256 values per pixel
	per color). The fourth column highlights the modified pixels.
	}
	\label{fig:mnist_misses}
\end{figure}

\subsubsection{Scalability and limitations}
Table~\ref{fig:mnist-robustness-tbl} summarizes the results for selected models
trained for the MNIST dataset. In the table, the abbreviations ``T/O'' and
``C/E'' stand for the portion of timeouts and counter-examples, respectively.
Note that for a fixed tree depth, the portion of counter-examples found is
negatively correlated with the model's accuracy.
This is also true for a fixed number of trees.
In this example, large models with $100$ trees and high tree depth already
exhibit a non-negligible portion of timeouts,  indicating the limitations of
\vgb. Despite that fact, it successfully verifies highly accurate models for
the MNIST dataset.
We run similar experiments on models trained for the GTSRB dataset, with roughly
similar results. Unlike MNIST, the portion of timeouts was only $1\%$, even for
large models. As with MNIST, the portion of counter-examples
varies between $10\%$ and $22\%$. Finally, the
ratio of robust images varies between $78\%$ and $88\%$.
\begin{table*}[tb]
	\centering
	%
	
	\fontsize{8.5pt}{10.0pt}
	\selectfont
	\begin{tabular}{|c|c|c|c|c|c|c|c|c|c|c|c|}
		\hline
		\multirow{2}{*}{ Depth} & \multirow{2}{*}{Trees} &  \multirow{2}{*}{Accuracy}
		& \multicolumn{3}{c|}{$\epsilon=1$} & \multicolumn{3}{c|}{$\epsilon=3$} & \multicolumn{3}{c|}{$\epsilon=5$} \\
		\cline{4-12}
		
		& & & Verified ($\rho$)& T/O &  C/E & Verified ($\rho$) & T/O
		&  C/E & Verified ($\rho$) & T/O &  C/E\\
		\hline\hline
		3 & 20 & 87.9 & 16.5\% & 0\% & 83.5\%    & 10\% & 0\% & 90\% & 10\% & 0\% &
		90\%\\
		3 & 50 & 92.4 & 24\% & 0\% & 76\% & 24\% & 0\% & 79\% & 21\% & 0\% & 79\%    \\
		
		3 & 100 & 94.4 & 39.5\% & 0.5\% & 60\% & 31.5\% & 0.5\% & 68\% & 31.5\% &
		0.5\% & 68\%     \\    
		\hdashline[2.5pt/5pt]
		8 & 20 & 94.8 & 39.5\% & 0\% & 60.5\%    & 21\% & 0\% & 79\% & 21\% & 0\%        & 79\%\\
		
		8 & 50 & 96.4 & 53.5\% & 6\% & 40.5\%    & 40\% & 9.5\% & 50.5\% & 42.5\% & 7\%
		& 50.5\%\\
		8 & 100 & 97 & 29\% & 41.5\% & 29.5\% & 20\% & 45\% & 35\% & 22\% & 43.5\% &
		34.5\%
		\\
		\hdashline[2.5pt/5pt]
		10 & 20 & 95.6 &  39.5\% & 0\% & 60.5\% & 25\% & 0\% & 75\% & 25\% & 0\% &
		75\%
		\\
		
		10 & 50 &  96.7 &  53\% & 8.5\% & 38.5\% & 39.6\% & 10.6\% & 49.8\% & 46\% &
		8.5\% & 45.5\%
		\\
		10 & 100     & 97.3    & 15\% & 60\% & 25\%    & 10.5\% & 62.5\% & 27\% & 11.5\% &
		62.5\% & 26\%  \\
		\hline
	\end{tabular}
	\caption{MNIST dataset: Evaluating the attainable portion of robust
	observations $\rho$, for models with varying number of trees, tree depth, and
	$\epsilon$. The abbreviations ``T/O'' and ``C/E'' stand for the portion of
	timeouts and counter-examples, respectively.}
	\label{fig:mnist-robustness-tbl}
\end{table*}


\paragraph{The effect of model structure on robustness}

As a side-effect of this research, we noticed that certain configuration 
parameters tend to result in more robust models. Hereafter, we briefly
discuss our observations. Table~\ref{table:varios_trees} summarizes selected results for
models with a similar accuracy which is achieved by varying the number of trees,
and the tree depth.
As can be observed, models with smaller tree depth have a higher $\rho$ value.
The results show that the tree depth has a potentially large impact on robustness.
That is, increasing the tree depth leads to less robust results.
Notice that tree depth similarly affects the robustness of regression models, as
is clearly indicated in Figure~\ref{fig:housing_graph}.

It is interesting to mention, that tree depth also plays a role in the
over-fitting problem of gradient boosted models. Models with large tree depth
are more likely to suffer from over-fitting~\cite{hastie01statisticallearning}.
In our context, a small tree depth yields better robustness and is also easier
to verify,  making \vgb attractive for practical use cases.


\begin{table}[tb]
	\centering
	\fontsize{9pt}{10pt}
	\selectfont
	\begin{tabular}{|c|c|c|c|c|c|}
		\hline 
		Depth & Trees  & Accuracy & Verified ($\rho$) & T/O &  C/E \\
		\hline\hline
		4 & 100 & 95.6 & 53\% & 3\% & 44\%    \\
		5 & 65 & 95.7 & 52\% & 1\% & 47\%    \\
		7 & 40 & 95.8 & 52\% & 0.5\% & 47.5\%    \\
		10 & 20 & 95.6 & 39.5\% & 0\% & 60.5\%    \\
		20 &    18    & 95.8    & 27.5\% & 0.0\%    & 72.5\% \\    
		\hline
	\end{tabular}
	\caption{MNIST dataset: Impact of boosted model's architecture on the
		attainable $\rho$ for the universal adversarial robustness property with
		$\epsilon =1$.}
	\label{table:varios_trees}
\end{table}

\section{Related Work}
\label{sec:related}
Reliability and security are of increasing interest by the research
community.
Numerous works demonstrate the (lack of) security
of popular machine learning models~\cite{Biggio,biggio1,Biggio2014}.
Others show
methods to generate adversarial inputs to such models~\cite{Zhou}. Thus,
certifying that specific models are robust to adversarial inputs is an important
research challenge. Indeed~\cite{mooly,KatzBDJK17,Gehr2018AISA}, introduced
methods for verifying robustness for various types of neural network models.
The robustness of gradient boosted models is also of interest, but existing works
are focused on empirical evaluation~\cite{Survey2}, or on training methods that
increase robustness~\cite{Sun2007}, while our work is the first to
certify gradient boosted models with formal and rigorous analysis.


Since our work is the first and only work that verifies gradient boosted
model, we survey existing works that verify other machine learning
models.
In~\cite{marta}, the authors suggest an SMT based approach for verifying
feed-forward multi-layer neural networks. They use a white box approach to
analyze the neural network layer by layer and also apply a set of methods to
discover adversarial inputs.
Note that gradient boosted models are fundamentally different from neural
networks and thus their method does not extend to such models.
In~\cite{KatzBDJK17}, the authors describe a Simplex based verification
technique, that is extended to handle the non-convex Rectified Linear Unit
(ReLU) activation functions. Such activation is fundamental in modern neural
networks and is not expressible with linear programming. The main disadvantage
of that approach is its inability to scale up to large networks with
thousands of ReLU nodes.

Alternatively, $AI^2$~\cite{Gehr2018AISA} uses ``abstract transformers'' to
overcome the difficulty of formally describing non-linear activation functions.
Safety properties such as robustness are then proved based on the abstract
interpretation. The over-approximation that is inherent in the technique allows
for scalable analysis. However, since they use abstractions, the
counter-examples provided are not always real counter-examples, and thus a
refinement process is required to end up with a concrete counter-example.

Finally, the authors of~\cite{mooly} adapt Boolean satisfiability to verify the
robustness of Binarized Neural Networks (BNN). Specifically, they apply a
counter-example-guided search procedure to check for robustness to adversarial
perturbations. They verified BNN models for the MNIST dataset. In comparison,
\vgb verifies slightly more accurate gradient boosted models for the same
dataset.
Similarly, in~\cite{Ehlers17} the authors propose a method for
verification of feed-forward neural networks. Their approach leverages
piece-wise linear activation functions. The main idea is to use a linear
approximation of the overall network behavior that can then be solved by SMT or
ILP.

\section{Conclusions and Future Work}
\label{sec:conclusion}
Our work is the first to verify robustness to adversarial perturbations for
gradient boosted models. Such models are among the most popular machine learning
techniques in practice. Our work introduces a model verification tool called
\vgb that transforms the challenge of certifying gradient boosted regression and
classification models into the task of checking the satisfiability of an SMT
formula that describes the model and the required robustness property. This
novel encoding is an important contribution of our work and includes formal
correctness proofs as well as performance optimizations. Once we have such
an (optimized) SMT formula, we check its
satisfiability with a standard solver. The solver either proves the robustness
property or provides a counter-example.

We extensively evaluated \vgb, with $3$ public datasets, and demonstrated its scalability to large and accurate models with hundreds of trees.
Our evaluation shows that the classification's confidence does not
provide a good indication of robustness.
Further, it indicates that models with a small tree depth tend to be more robust
even if the overall accuracy is similar.
Such models are also known to suffer less from over-fitting. We believe that
there may be an implicit correlation between robustness and good generalization,
and leave further investigation to future work.
Additionally, the counter-examples generated by \vgb may be leveraged in the
training phase of the gradient boosted models to optimize their robustness. However,
we leave such usage for future work.

\fontsize{9.5pt}{10.5pt}
\selectfont
\balance
\bibliography{verigb}
\bibliographystyle{abbrv}
\end{document}